\documentclass[letterpaper, 10 pt, conference]{ieeeconf}  

\IEEEoverridecommandlockouts                              
\usepackage{cite}
\usepackage{caption}
\usepackage{amsmath,amssymb,amsfonts}
\usepackage{graphicx}
\usepackage{textcomp}
\usepackage{xcolor}
\usepackage{booktabs}
\usepackage{multirow}
\usepackage{graphicx}
\usepackage[caption=false,font=footnotesize]{subfig}
\usepackage{xspace}

\usepackage[ruled,vlined,linesnumbered]{algorithm2e}
\SetKwInput{KwIn}{Input}
\SetKwInput{KwOut}{Output}
\SetKwComment{Comment}{$//$ }{}
\DontPrintSemicolon

\newtheorem{theorem}{Theorem}
\newtheorem{assumption}{Assumption}
\newtheorem{lemma}{Lemma}
\newtheorem{definition}{Definition}

\newcommand{\ie}{\textit{i.e.}\xspace}

\DeclareMathOperator{\dimn}{dim}

\title{\LARGE \bf
Asymptotically Optimal Multi-Robot Task and Motion Planning
}

\author{
Thi Thuy Ngan Duong$^{1}$,
Cheuk Tung Shadow Yiu$^{1}$,
Rahul Shome$^{2}$,
and Yoonchang Sung$^{1\dagger}$%
\thanks{$^{1}$Nanyang Technological University, $^{2}$The Australian National University}
\thanks{$^{\dagger}$Corresponding author: {\tt\small yoonchang.sung@ntu.edu.sg}}%
}

\begin{document}

\maketitle
\thispagestyle{empty}
\pagestyle{empty}

\begin{abstract}
Multi-robot task and motion planning (MR-TAMP) requires jointly reasoning about discrete task decisions and continuous collision-free motions of multiple interacting robots. Although asymptotically optimal algorithms have been developed for task and motion planning, extending these guarantees to the multi-robot setting introduces an important challenge: different task transitions may involve different subsets of robots and therefore impose constraints of different dimensions on the composite configuration space. Consequently, an asymptotically optimal planner must not only optimize motion within each task mode, but also ensure sufficient exploration of the different types of transitions connecting them. We characterize this transition structure and establish sufficient conditions for global asymptotic optimality in MR-TAMP, requiring persistent coverage of relevant transitions and asymptotically improving motion planning within connected feasible regions. Based on these conditions, we develop an efficient asymptotically optimal MR-TAMP algorithm that combines evolving individual-robot roadmaps with implicit tensor-product search, avoiding explicit construction of the composite roadmap. The planner further employs conditional transition sampling, lazy collision checking, and mode- and solution-level guidance to improve finite-time planning efficiency while retaining persistent exploration. The resulting framework provides asymptotic optimality guarantees for multi-robot manipulation while efficiently exploiting the structure of individual-robot motion planning.

\end{abstract}

\section{Introduction}

Multi-robot manipulation requires making decisions at two tightly coupled levels. At the task level, a planner must determine which robots perform which operations and in what order or concurrency pattern. At the motion level, it must generate collision-free trajectories that satisfy manipulation constraints and coordinate all robots in a shared workspace. The \emph{multi-robot task and motion planning} (MR-TAMP) problem is naturally multi-modal: discrete changes such as grasping, placing, handing off, or releasing objects alter the continuous constraints governing subsequent motion (Fig.~\ref{fig:first_figure}).

A central question is whether an MR-TAMP planner can provide an \emph{asymptotic optimality} guarantee: as planning effort increases, the cost of the best returned solution converges to the globally optimal MR-TAMP cost. Prior work has established asymptotic optimality for integrated task and motion planning by analyzing planning over multiple continuous manifolds connected through transition sets~\cite{shome2020pushing}. Extending this analysis to MR-TAMP, however, requires exposing additional geometric structure. In particular, a transition may involve one robot, several robots, or all robots simultaneously. Consequently, different portions of a mode boundary can be defined by different numbers of independent constraints and can therefore have different intrinsic dimensions.

For example, consider two robots that independently need to grasp two objects. A transition may correspond to only the first robot satisfying its grasp constraint, only the second robot satisfying its grasp constraint, or both robots satisfying their constraints simultaneously. The simultaneous transition generally lies on a lower-dimensional set than either single-robot transition. Thus, the transition boundary is not naturally represented as a single smooth manifold of fixed dimension. Instead, it consists of manifold pieces of potentially different dimensions. We refer to each such piece as a \emph{transition stratum}, and their union as a \emph{stratified transition set}.

\begin{figure}
    \centering
    \includegraphics[width=1.00\linewidth]{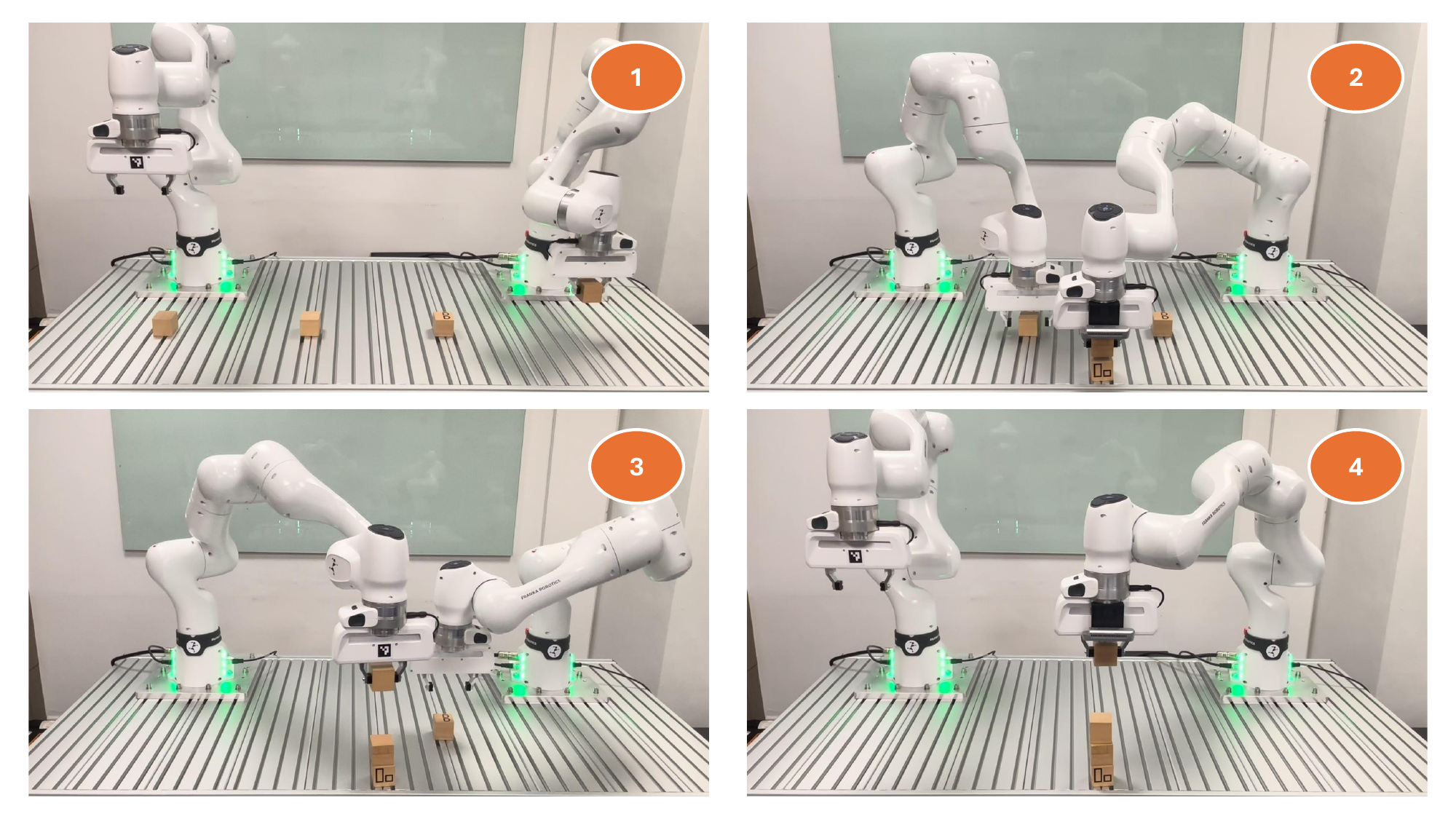}
    \caption{\textbf{Multi-robot task and motion planning for cooperative manipulation.} Multiple robots must jointly reason about task assignments and manipulation order while planning collision-free motions through changing modes. }
    \label{fig:first_figure}
\end{figure}

This observation has a direct consequence for asymptotically optimal planning. Each transition stratum is lower-dimensional than the composite configuration space and therefore has zero probability of being sampled exactly by ordinary full-dimensional sampling. Moreover, because different subsets of robots induce strata of different dimensions, a transition sampler designed for only one type of transition may fail to cover solutions requiring another. An AO MR-TAMP planner must therefore explicitly cover every transition stratum that can participate in an optimal solution. Building on prior multi-modal AO-TAMP analysis~\cite{shome2020pushing}, we characterize this geometric structure and show that conditions on orbital motion approximation, nonzero-probability coverage of the required transition strata, and persistent exploration of the corresponding orbit sequence are sufficient for global asymptotic optimality.

These theoretical conditions motivate our planning algorithm. We represent the discrete-continuous structure using an \emph{orbital graph}, whose vertices are \emph{orbits}, \ie, path-connected components of the feasible configuration space associated with a fixed mode, and whose edges correspond to feasible modal transitions. A conditional transition sampler selects the subset of robots participating in a transition and samples directly from the corresponding transition stratum, ensuring the coverage required by the analysis.

Efficient continuous planning remains challenging because explicitly constructing the composite multi-robot roadmap is prohibitively expensive. Instead, we incrementally construct constituent roadmaps for the individual robots while ignoring robot--robot interactions, and search their tensor product only implicitly. Inspired by the product-roadmap search principle of dRRT$^*$~\cite{shome2020drrt}, we maintain a \emph{tensor-product search tree} that explores only relevant combinations of individual-robot roadmap states without explicitly enumerating the composite roadmap. The constituent roadmaps continue to evolve as new modes and orbits are discovered. We further employ lazy collision checking, deferring single-robot motion-validity checks until the corresponding motions are considered during tensor-product tree expansion and checking robot--robot interactions only for explored composite motions.

We additionally exploit the multi-modal structure to guide computational effort. Before a complete solution is found, \emph{optimistic modal guidance} prioritizes exploration toward modes estimated to be closer to a goal mode. Once a solution is available, \emph{orbit-path guidance} allocates more effort to the goal-reaching orbit sequence to progressively improve its cost. Both forms of guidance are mixed with a persistent exploration policy, thereby improving finite-time efficiency while ensuring that relevant alternative orbits continue to receive planning effort.

The contributions of this work are:
\begin{itemize}
    \item We characterize the geometric structure of MR-TAMP mode transitions and identify sufficient conditions for global asymptotic optimality, including explicit coverage of transition strata induced by different subsets of transitioning robots.

    \item We develop an efficient AO MR-TAMP algorithm that realizes these conditions using evolving individual-robot roadmaps, implicit tensor-product search, lazy collision checking, and mode- and orbit-level guidance to improve finite-time planning efficiency.
\end{itemize}

\section{Related Work}

Our work lies at the intersection of TAMP, MR-TAMP, and multi-robot motion planning, with particular emphasis on multi-modal structure and asymptotic optimality.

\subsection{Task and Motion Planning}

TAMP jointly reasons over discrete task decisions and continuous robot motions. Representative approaches include hierarchical~\cite{kaelbling2011hierarchical}, optimization-based~\cite{toussaint2015logic}, constraint-based~\cite{dantam2018incremental}, and sampling-based methods~\cite{garrett2018ffrob,garrett2020pddlstream}; a broader review is available in the literature~\cite{garrett2021integrated}. Multi-modal formulations are particularly relevant to manipulation, where grasping, placing, and releasing objects change the continuous constraints governing feasible motion~\cite{hauser2010multi}.

Several works study asymptotic optimality in TAMP and related multi-modal planning problems. Vega-Brown and Roy~\cite{vega2020asymptotically} consider planning under piecewise-analytic constraints, while Shome \emph{et al.}~\cite{shome2020pushing} establish asymptotic optimality for integrated multi-modal TAMP and provide the main theoretical foundation for our analysis. TMIT$^*$~\cite{thomason2022task} further develops an almost-surely asymptotically optimal integrated TAMP planner. Our work extends this line of analysis to the transition structure induced by multiple robots.

\subsection{Multi-Robot Task and Motion Planning}

MR-TAMP additionally requires reasoning about robot assignment, concurrency, and inter-robot geometric constraints. Prior methods address task decomposition and subtask dependencies~\cite{motes2020multi}, scalable hypergraph-based decomposition~\cite{motes2023hypergraph,lee2026lazy}, long-horizon rearrangement and assembly~\cite{hartmann2022long}, collaborative manipulation and handovers~\cite{zhang2023multi}, asynchronous task-plan refinement~\cite{sung2024asynchronous}, and integrated temporal scheduling and motion planning~\cite{garrett2026schedulestream}. These methods primarily target efficient computation of feasible or high-quality coordinated plans.

Hartmann \emph{et al.}~\cite{hartmann2025sampling} consider multi-modal, multi-robot, multi-goal planning in the composite configuration space and use asymptotically optimal sampling-based planners for the underlying continuous motion problems. In contrast, we study sufficient geometric and exploration conditions for \emph{global} asymptotic optimality in MR-TAMP. In particular, transitions involving different subsets of robots can occupy strata of different intrinsic dimensions, requiring explicit coverage of the relevant transition strata.

\subsection{Multi-Robot Motion Planning}

Multi-robot motion planning has been extensively studied in both discrete and continuous settings. Multi-agent path finding (MAPF) methods such as M* and conflict-based search reduce joint-state search complexity by exploiting sparse interactions among robots~\cite{wagner2015subdimensional,sharon2015conflict}. Continuous approaches similarly exploit individual-robot structure through selective coupling and implicit product-roadmap search~\cite{otte2018dynamic,solovey2016finding,shome2020drrt}. In particular, dRRT$^*$~\cite{shome2020drrt} asymptotically optimizes paths over an implicitly represented tensor product of individual-robot roadmaps. Our planner builds on this structure for motion planning within each orbit, while additionally reasoning over the multi-modal transitions connecting different orbits.

\section{Problem Formulation}
\label{sec:problem}

Let $\mathcal{R}=\{r_1,\ldots,r_R\}$ denote a set of $R$ robots. Robot $r$ has configuration space $\mathcal{C}_r$, and the composite robot configuration is $\mathbf{q}=(q_1,\ldots,q_R)\in \mathcal{C} = \prod_{r\in\mathcal{R}}\mathcal{C}_r.$ The system may additionally contain movable objects. For conciseness, their continuous states are included in the mode-specific configuration representation when necessary.
\begin{definition}[Mode]
A mode $\mathcal{M}$ specifies the discrete manipulation state of the system, represented by the robot--object attachment relations.
Each mode induces a corresponding feasible configuration space $\mathcal{C}^{\mathcal{M}}_{\mathrm{free}}\subseteq\mathcal{C}$,
consisting of configurations that satisfy the attachment constraints and are collision-free.
\end{definition}

For intuition, classical manipulation planning distinguishes \emph{transit} motion, where the robot moves while an object remains at a stable placement, from \emph{transfer} motion, where the robot moves while holding the object at a fixed grasp~\cite{simeon2004manipulation}. These correspond to different modes because they impose different kinematic constraints; MR-TAMP generalizes this notion to assignments involving multiple robots and objects.

A mode need not be path connected. We therefore distinguish its connected feasible components.

\begin{figure}
    \centering
    \includegraphics[width=0.99\linewidth]{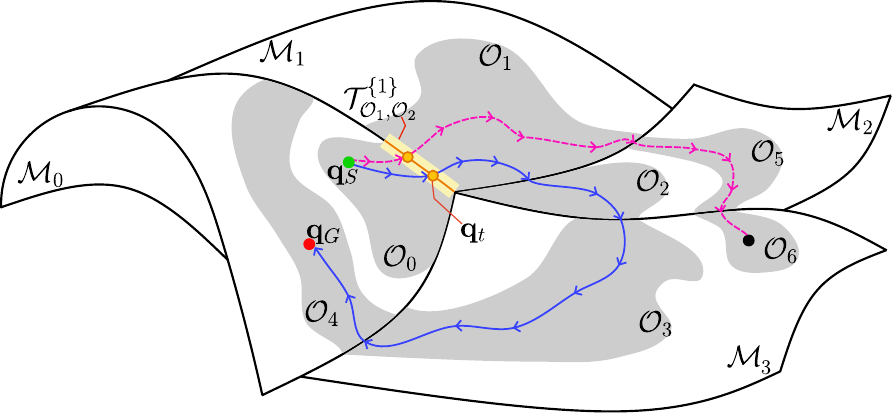}
    \caption{\textbf{Modes, orbits, and transition manifolds.} A simple two-robot example in which each robot transports an object to its respective goal. In mode $\mathcal{M}_0$, both robots are in transit without carrying objects; in $\mathcal{M}_1$, Robot~1 carries Object~1; in $\mathcal{M}_2$, both robots carry their respective objects; and in $\mathcal{M}_3$, only Robot~2 carries Object~2. The valid path (blue) transitions between modes and their corresponding orbits to reach the single goal configuration $\mathbf{q}_G$. The invalid path (purple) reaches a dead-end configuration, where the selected object placement renders subsequent transitions infeasible.}
    \label{fig:manifolds}
\end{figure}

\begin{definition}[Orbit]
An orbit $\mathcal{O}$ is a path-connected component of $\mathcal{C}^{\mathcal{M}}_{\mathrm{free}}$ for some mode $\mathcal{M}$. Any two configurations in the same orbit can be connected by a continuous feasible path that remains within the same mode.
\end{definition}

Fig.~\ref{fig:manifolds} illustrates the resulting mode, orbit structure for a two-robot manipulation example, where different object
attachments induce different modes, each of which may contain multiple orbits.

Thus, a single mode may contain multiple orbits that cannot be connected without leaving the mode. A feasible modal transition occurs at a configuration shared by the closures of two incident orbits.

Let $\mathbf{q}_s$ be the initial configuration and $\mathcal{Q}_g$ the goal set. An MR-TAMP solution is a continuous trajectory $\Pi:[0,1]\rightarrow \bigcup_{\mathcal{M}} \mathcal{C}^{\mathcal{M}}_{\mathrm{free}}$ that starts at $\mathbf{q}_s$, terminates in $\mathcal{Q}_g$, and changes modes only through feasible transition configurations.

A solution can be decomposed as
\begin{equation}
\Pi =
\pi_0 \circ \mathbf{t}_1 \circ \pi_1
\circ \cdots \circ
\mathbf{t}_K \circ \pi_K,
\label{eq:path-decomp}
\end{equation}
where each $\pi_k$ is a feasible path contained in an orbit
$\mathcal{O}_k$, and each $\mathbf{t}_k$ is a transition configuration
connecting two consecutive orbits.

Let $J(\Pi)$ denote the solution cost. We assume that the objective is compatible with a decomposition into orbital-motion and transition costs:
\begin{equation}
J(\Pi) = \sum_{k=0}^{K} J_{\mathcal{O}_k}(\pi_k) + \sum_{k=1}^{K} J_T(\mathbf{t}_k).
\label{eq:additive-cost}
\end{equation}
The objective is to find $\Pi^*=\arg\min_{\Pi} J(\Pi),$ with optimal cost $J^*=J(\Pi^*)$.

\section{Geometry of Multi-Robot Mode Transitions}
\label{sec:geometry}

The distinguishing geometric feature of MR-TAMP is that a modal transition may constrain an arbitrary nonempty subset of robots. Let $A\subseteq\mathcal{R}$, 
where $A\neq\emptyset$, be the \emph{active transition set}: the robots whose configurations satisfy new transition constraints at a particular modal change.

Consider two incident orbits $\mathcal{O}$ and $\mathcal{O}'$. For an active set $A$, define
\begin{equation}
\mathcal{T}^{A}_{\mathcal{O},\mathcal{O}'} = \left\{\mathbf{q}\in \overline{\mathcal{O}} \cap \overline{\mathcal{O}'} : h_A(\mathbf{q})=0 \right\},
\label{eq:transition-stratum}
\end{equation}
where $\overline{\mathcal{O}}$ denotes the closure of orbit $\mathcal{O}$, including its boundary configurations, and $h_A$ collects the transition constraints associated with the robots in $A$. The use of orbit closures is important because a transition configuration generally lies on the boundary between two incident orbits rather than in the interior of either one. We refer to each connected smooth component of $\mathcal{T}^{A}_{\mathcal{O},\mathcal{O}'}$ as a \emph{transition stratum}.

Suppose the transition constraints associated with each robot $r\in A$ have rank $k_r$, and that these constraints are jointly independent, so that the combined transition map $h_A$ satisfies $\operatorname{rank} Dh_A = \sum_{r\in A} k_r.$ Then,
\begin{equation}
\dimn \left( \mathcal{T}^{A}_{\mathcal{O},\mathcal{O}'} \right) = \dimn(\mathcal{O}) - \sum_{r\in A} k_r.
\end{equation}
Thus, transition sets associated with different active robot subsets can have different intrinsic dimensions.


The complete transition boundary is therefore naturally represented as
\begin{equation}
\mathcal{T}_{\mathcal{O}} = \bigcup_{\mathcal{O}'} \; \bigcup_{\emptyset\neq A\subseteq\mathcal{R}} \mathcal{T}^{A}_{\mathcal{O},\mathcal{O}'}.
\label{eq:stratified-union}
\end{equation}

The relevant consequence is not merely terminological. Each transition stratum has lower dimension than its ambient orbit and hence zero measure under an ordinary sampler defined over the full-dimensional configuration space. Furthermore, strata indexed by different active subsets need not have the same dimension. An MR-TAMP planner that is required to discover all relevant feasible transition sequences must therefore sample these sets explicitly.

\subsection{Conditional Stratified Transition Sampling}

We use a mixture sampler over transition strata. For a selected orbit $\mathcal{O}$, let $p_{\mathcal{O}}$ denote a distribution over active robot subsets. The sampler first chooses
\begin{equation}
A\sim p_{\mathcal{O}}(A),
\label{eq:subset-prob}
\end{equation}
and then samples a transition configuration from a conditional distribution
$\mu_{\mathcal{O},A}$:
\begin{equation}
\mathbf{t} \sim \mu_{\mathcal{O},A}, \qquad \mathbf{t}\in \mathcal{T}^{A}_{\mathcal{O}},
\label{eq:conditional}
\end{equation}
where $\mathcal{T}^{A}_{\mathcal{O}} = \bigcup_{\mathcal{O}'}
\mathcal{T}^{A}_{\mathcal{O},\mathcal{O}'}$.
The distribution $\mu_{\mathcal{O},A}$ may be realized using task-specific constrained sampling, inverse kinematics, projection, or another sampler defined with respect to the intrinsic measure of the corresponding stratum.

For asymptotic coverage, equal probability across active subsets is
unnecessary. It is sufficient that every stratum that may be required by an optimal solution receives persistent positive sampling probability. This allows the planner to bias sampling toward more promising transitions while retaining the coverage required for asymptotic optimality.

\section{Asymptotically Optimal MR-TAMP}
\label{sec:algorithm}

We now present the proposed MR-TAMP planner. The planner maintains an \emph{orbital graph} $\mathcal{G}_O$, whose vertices are discovered orbits and whose edges represent feasible modal transitions, together with a \emph{modal graph} $\mathcal{G}_M$ used to guide exploration toward the goal. Fig.~\ref{fig:orbital_tree} illustrates an example orbit path through the
orbital graph from the initial orbit to the goal orbit. Transition configurations are generated using the conditional stratified sampler of Section~\ref{sec:geometry}. In Algorithm~\ref{alg:ao-mrtamp}, $\mathrm{SampleStratifiedTransitions}{}$ implements this sampler by first sampling an active robot set $A\sim p_{\mathcal{O}}(A)$ and then sampling from the corresponding transition stratum.

\begin{figure}
    \centering
    \includegraphics[width=0.99\linewidth]{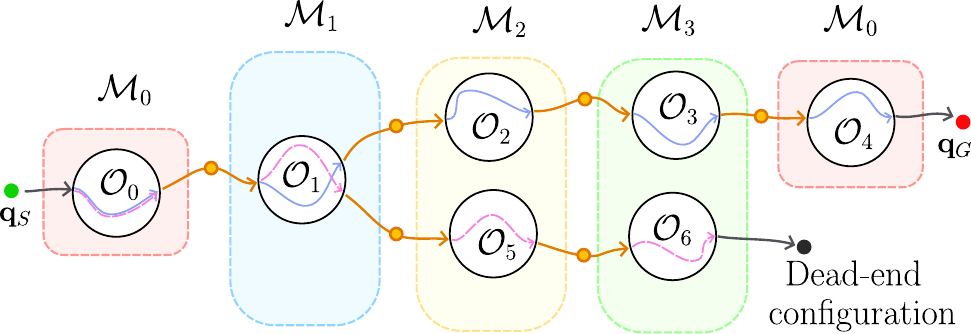}
    \caption{
    \textbf{Modal and orbital graphs} corresponding to the multi-modal configuration space in Fig.~\ref{fig:manifolds}. The modal graph captures mode transitions, while the orbital graph distinguishes path-connected feasible components within each mode.
    }
    \label{fig:orbital_tree}
\end{figure}

Explicitly constructing a roadmap in the composite configuration space is prohibitively expensive. Instead, each robot $i$ maintains an evolving \emph{constituent roadmap} $G_i$, constructed independently of the other robots. Unlike maintaining a separate roadmap for each orbit, each $G_i$ evolves across the discovered modes and orbits as a mode- and orbit-labeled single-robot planning graph shared by the global search. Together, these roadmaps conceptually induce the tensor-product roadmap
\begin{equation}
G_{\otimes}=G_1\otimes\cdots\otimes G_R,
\label{eq:tensor-roadmap}
\end{equation}
which is never explicitly constructed. The planner instead maintains a \emph{tensor-product search tree} $\mathbf{T}$ using the implicit product-neighbor expansion and rewiring mechanism of dRRT$^*$~\cite{shome2020drrt}; we refer to that work for details of the standard product-search construction.

The constituent roadmaps are constructed lazily. Candidate single-robot motions are not exhaustively validated when they are generated. Instead, their motion-validity checks are deferred until the corresponding edges are considered during tensor-product tree expansion, while robot--robot collisions are checked only for explored composite motions. Thus, geometric computation is concentrated on portions of the roadmaps actually examined by the multi-robot search.

An orbit may be reached through multiple transition configurations and may contain multiple candidate exits. All sampled transitions remain available to the global tensor-product search with their cost-to-come values, allowing different entries and exits to compete during search and rewiring rather than committing to the first discovered transition.

The multi-modal structure further guides where planning effort is allocated. Before a solution is available, \emph{optimistic modal guidance} biases transition sampling toward modes estimated to be closer to a goal mode. After a solution is found, \emph{orbit-path guidance} preferentially allocates sampling and tensor-tree expansion to orbits along the incumbent goal-reaching path to improve its cost. Both guidance mechanisms are combined with persistent exploration, so alternative orbit sequences continue to receive planning effort.

\begin{algorithm}[t]
\caption{Modal- and Orbit-Guided AO MR-TAMP}
\label{alg:ao-mrtamp}

\KwIn{Initial configuration $\mathbf{q}_s$, goal set $\mathcal{Q}_g$,
interior samples $N_m$, transition samples $N_t$}
\KwOut{Best solution $\Pi_{\mathrm{best}}$}

$\mathcal{O}_s \gets \mathrm{Orbit}(\mathbf{q}_s)$\;
$\mathcal{G}_O \gets \{\mathcal{O}_s\}$\;
$\mathcal{G}_M \gets \{\mathrm{ModalState}(\mathcal{O}_s)\}$\;
Initialize constituent roadmaps $\{G_i\}_{i=1}^{R}$\;
Initialize tensor-product search tree $\mathbf{T}$ at
$(\mathcal{O}_s,\mathbf{q}_s)$\;
$L_O \gets \emptyset,\;
 \Pi_{\mathrm{best}} \gets \emptyset$\;

\While{planning budget remains}{

    \eIf{$\Pi_{\mathrm{best}}\neq\emptyset$}{
        $(\mathcal{O}_{\mathrm{samp}},\Gamma_{\mathrm{samp}})
        \gets
        \mathrm{OrbitPathGuidance}(L_O,\mathcal{G}_O)$\;
    }{
        $S_{\mathrm{samp}}
        \gets \mathrm{ModalFrontier}(\mathcal{G}_M)$\;

        $\mathcal{O}_{\mathrm{samp}}
        \gets
        \mathrm{SelectOrbit}(S_{\mathrm{samp}},\mathcal{G}_O)$\;

        $\Gamma_{\mathrm{samp}}
        \gets
        \mathrm{OptimisticModalGuidance}
        (S_{\mathrm{samp}},\mathcal{G}_M,\mathcal{Q}_g)$\;
    }

    $Q_m \gets
    \mathrm{SampleOrbitInterior}
    (\mathcal{O}_{\mathrm{samp}},N_m)$\;

    $Q_t \gets
    \mathrm{SampleTransitions}
    (\mathcal{O}_{\mathrm{samp}},
     \Gamma_{\mathrm{samp}},N_t)$\;

    \ForEach{feasible transition $\mathbf{t}\in Q_t$}{
        $\mathcal{O}_{\mathrm{new}}
        \gets \mathrm{DiscoverAdjacentOrbit}(\mathbf{t})$\;

        Update $\mathcal{G}_O$ and $\mathcal{G}_M$ with
        $\mathcal{O}_{\mathrm{new}}$ and $\mathbf{t}$\;
    }

    $\mathrm{ExpandRoadmaps}
    (\{G_i\}_{i=1}^{R},
     \mathcal{O}_{\mathrm{samp}},Q_m,Q_t)$\;

    \While{extension budget remains}{

        $\mathbf{q}_{\mathrm{near}}
        \gets
        \mathrm{SelectTensorNode}
        (\mathbf{T},\mathcal{O}_{\mathrm{samp}},L_O)$\;

        $\mathbf{q}_{\mathrm{new}}\gets\mathrm{GetNeighbor}
        (\mathbf{q}_{\mathrm{near}},
         \mathcal{O}_{\mathrm{samp}},
         \{G_i\}_{i=1}^{R})$\;

        \If{$\mathrm{LazyValidate}
        (\mathbf{q}_{\mathrm{near}},\mathbf{q}_{\mathrm{new}})$}{
            $\mathrm{InsertAndRewire}
            (\mathbf{T},
             \mathbf{q}_{\mathrm{near}},
             \mathbf{q}_{\mathrm{new}})$\;
        }
    }

    $(\Pi,J)
    \gets \mathrm{BestSolution}
    (\mathbf{T},\mathcal{Q}_g)$\;

    \If{$\Pi\neq\emptyset$ \textbf{and}
        $(\Pi_{\mathrm{best}}=\emptyset
        \ \textbf{or}\ 
        J<J(\Pi_{\mathrm{best}}))$}{

        $\Pi_{\mathrm{best}}\gets\Pi$\;

        $L_O\gets
        \mathrm{ExtractOrbitPath}(\Pi_{\mathrm{best}})$\;
    }
}

\Return{$\Pi_{\mathrm{best}}$}\;

\end{algorithm}

Algorithm~\ref{alg:ao-mrtamp} summarizes the overall procedure. The guidance mechanisms affect the finite-time allocation of computation but do not exclude alternative orbit sequences. Section~\ref{sec:theory} shows that stratified transition coverage, persistent exploration, and asymptotically improving tensor-product motion search establish asymptotic optimality of Algorithm~\ref{alg:ao-mrtamp}.

\section{Asymptotic Optimality Analysis}
\label{sec:theory}

The general multi-modal AO analysis~\cite{shome2020pushing} establishes convergence when robust transition neighborhoods are sufficiently sampled and the continuous motions between them are asymptotically optimized. Its application to MR-TAMP is nontrivial, however, because the transition boundary is a stratified collection of manifolds of potentially different dimensions. We therefore first specialize the robust-transition argument to this geometry and then show that the sampling and continuous-search mechanisms of Algorithm~\ref{alg:ao-mrtamp} satisfy the resulting conditions.

We assume the standard robustness conditions~\cite{shome2020pushing}: informally, an optimal solution can be approximated arbitrarily closely by feasible solutions passing through positive-measure neighborhoods of its transitions. We additionally assume that the objective is consistent with the decomposition in Equation~\eqref{eq:additive-cost}.

\begin{assumption}[Transition-stratum structure]
\label{ass:strata}
Every transition boundary relevant to a robustly optimal solution can be decomposed locally into finitely many smooth manifold pieces, referred to as \emph{transition strata}. For every required transition, sufficiently small neighborhoods within its stratum have positive intrinsic measure.
\end{assumption}

\begin{assumption}[Transition-stratum coverage]
\label{ass:transition}
For every positive-intrinsic-measure transition neighborhood $B\subset \mathcal{T}^{A}_{\mathcal{O},\mathcal{O}'}$ required by a robustly optimal solution, there exists $\eta_B>0$ such that, whenever $\mathcal{O}$ is selected for transition sampling, the probability of sampling a transition in $B$ is at least $\eta_B$.
\end{assumption}

\begin{assumption}[Persistent exploration]
\label{ass:persistent}
Every discovered orbit belonging to a robustly optimal orbit sequence receives infinitely many transition-sampling, constituent-roadmap, and tensor-product-search expansion attempts almost surely.
\end{assumption}

\subsection{Robust MR-TAMP Transition Sequences}

The robust-transition argument~\cite{shome2020pushing} must be interpreted using the intrinsic geometry of each transition stratum.

\begin{lemma}[Robust transition-sequence approximation]
\label{lem:robust-seq}
For every $\epsilon>0$, suppose an optimal MR-TAMP solution satisfies the robustness conditions~\cite{shome2020pushing} and Assumption~\ref{ass:strata}. Then there exists a finite sequence of transition neighborhoods $B_1,\ldots,B_K,$ where $B_k\subset \mathcal{T}^{A_k}_{\mathcal{O}_{k-1},\mathcal{O}_k},$ each having positive intrinsic measure in its corresponding stratum, such that transitions sampled from these neighborhoods together with sufficiently accurate orbital motions admit a feasible MR-TAMP solution of cost at most $J^*+\epsilon$.
\end{lemma}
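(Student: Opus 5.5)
We plan to reduce the statement to the robust-transition argument of~\cite{shome2020pushing}, applied stratum by stratum. Fix $\epsilon>0$ and write the optimal solution in the form of Equation~\eqref{eq:path-decomp}, $\Pi^*=\pi_0\circ\mathbf{t}_1\circ\cdots\circ\mathbf{t}_K\circ\pi_K$. Each transition $\mathbf{t}_k$ lies on the boundary between $\mathcal{O}_{k-1}$ and $\mathcal{O}_k$ and has an active set $A_k$, so $\mathbf{t}_k\in\mathcal{T}^{A_k}_{\mathcal{O}_{k-1},\mathcal{O}_k}$. By Assumption~\ref{ass:strata}, the point $\mathbf{t}_k$ lies in one of finitely many smooth strata. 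We fix that stratum once and for all. All subsequent neighborhoods are taken relative to the intrinsic topology and measure of this stratum, not the ambient space $\mathcal{C}$.

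\textbf{Step 1: choosing the neighborhoods.} We use the robustness conditions to obtain the approximating solutions. For every $\delta>0$, there is a robust radius such that any transition configuration $\mathbf{t}'_k$ within intrinsic distance $\delta_k$ of $\mathbf{t}_k$ in its stratum can be joined to its neighbors by feasible paths in $\mathcal{O}_{k-1}$ and $\mathcal{O}_k$. These paths stay close to $\pi_{k-1}$ and $\pi_k$, so the orbital costs change by at most $\delta$. We then use continuity of $J_{\mathcal{O}_k}$ with respect to path perturbation and continuity of $J_T$ on the stratum. These give radii $\delta_k$ for which each of the $K+1$ orbital terms and each of the $K$ transition terms in Equation~\eqref{eq:additive-cost} changes by at most $\epsilon/(2K+1)$. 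Set $B_k$ to be the intrinsic ball of radius $\delta_k$ about $\mathbf{t}_k$, intersected with the stratum. Assumption~\ref{ass:strata} gives $B_k$ positive intrinsic measure once $\delta_k$ is small enough. Since $K$ is finite, the sequence $B_1,\ldots,B_K$ is finite.

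\textbf{Step 2: assembling the solution.} Pick any $\mathbf{t}'_k\in B_k$. The endpoints $\mathbf{t}'_k,\mathbf{t}'_{k+1}$ lie in $\overline{\mathcal{O}_k}$, and we need a feasible path in $\mathcal{O}_k$ between them of cost within $\epsilon/(2K+1)$ of $J_{\mathcal{O}_k}(\pi_k)$. This comes from robust clearance, meaning a tube around $\pi_k$ inside the orbit, with endpoint segments attached near the boundary. The phrase ``sufficiently accurate orbital motions'' means paths inside these tubes. Concatenating the pieces gives a feasible MR-TAMP trajectory, since modes change only at the $\mathbf{t}'_k$, and these are valid transitions by membership in the strata. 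Summing the $2K+1$ bounds then gives cost at most $J^*+\epsilon$.

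\textbf{Main obstacle.} The delicate step is Step~1: the robustness conditions of~\cite{shome2020pushing} were stated for transition manifolds of one dimension, whereas here neighboring strata may have different dimensions, e.g.\ a simultaneous two-robot transition that is a limit of single-robot ones. We must make sure the perturbed transitions stay in the same stratum with the same active set $A_k$, so that the constraint $h_{A_k}=0$ is preserved. We must also make sure the robust tubes are compatible with the intrinsic neighborhoods rather than ambient ones. Our first approach is to use the joint-independence condition $\operatorname{rank}Dh_{A_k}=\sum_{r\in A_k}k_r$ and the implicit function theorem to get a local chart of the stratum near $\mathbf{t}_k$. In this chart, intrinsic balls are images of Euclidean balls. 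We then transport the robustness condition through the chart, shrinking $\delta_k$ so that the ball avoids lower-dimensional adjacent strata.
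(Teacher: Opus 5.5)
Your proposal is correct and follows essentially the same route as the paper: take the robust transition neighborhoods supplied by~\cite{shome2020pushing}, and restrict each one to the stratum $\mathcal{T}^{A_k}_{\mathcal{O}_{k-1},\mathcal{O}_k}$ containing the corresponding transition. Then use Assumption~\ref{ass:strata} for positive intrinsic measure, and shrink the neighborhoods and orbital approximations until the cost is within $\epsilon$ of $J^*$; your $\epsilon/(2K+1)$ bookkeeping simply makes this last step explicit. The implicit-function-theorem chart and the joint-independence rank condition are not needed, since the paper uses only Assumption~\ref{ass:strata}, which already gives smooth strata whose small neighborhoods have positive intrinsic measure, so intersecting the robust neighborhood with the stratum suffices.
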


\begin{proof}
The multi-modal analysis~\cite{shome2020pushing} provides robust transition neighborhoods for a finite transition sequence. In MR-TAMP, each required transition lies on a particular stratum of the stratified transition set in Section~\ref{sec:geometry}. Under Assumption~\ref{ass:strata}, each robust transition neighborhood can be restricted to a sufficiently small neighborhood within the corresponding stratum while retaining positive intrinsic measure. Choosing these neighborhoods and the intervening orbital approximations sufficiently small yields a feasible solution whose cost approaches $J^*$ arbitrarily closely.
\end{proof}

\subsection{Transition Coverage and Orbital Convergence}

\begin{lemma}[Almost-sure transition coverage]
\label{lem:transition-coverage}
Under Assumptions~\ref{ass:transition} and~\ref{ass:persistent}, every
transition neighborhood $B_k$ in
Lemma~\ref{lem:robust-seq} is eventually sampled almost surely.
Consequently, every orbit along the corresponding finite robustly optimal
orbit sequence is eventually discovered almost surely.
\end{lemma}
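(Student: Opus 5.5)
We argue by induction along the finite robustly optimal orbit sequence $\mathcal{O}_0,\mathcal{O}_1,\ldots,\mathcal{O}_K$ from Lemma~\ref{lem:robust-seq}, with $\mathcal{O}_0=\mathcal{O}_s$. The inductive claim is: almost surely, orbit $\mathcal{O}_{k}$ is eventually discovered, and thereafter some transition in $B_{k+1}\subset\mathcal{T}^{A_{k+1}}_{\mathcal{O}_k,\mathcal{O}_{k+1}}$ is eventually sampled, which in turn discovers $\mathcal{O}_{k+1}$ via $\mathrm{DiscoverAdjacentOrbit}$. The base case holds deterministically, because Algorithm~\ref{alg:ao-mrtamp} inserts $\mathcal{O}_s$ into $\mathcal{G}_O$ at initialization.

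For the inductive step, suppose $\mathcal{O}_k$ has been discovered at some (random) iteration. By Assumption~\ref{ass:persistent}, $\mathcal{O}_k$ is selected for transition sampling infinitely often almost surely. Let $\tau_1<\tau_2<\cdots$ be the successive selection times after discovery. By Assumption~\ref{ass:transition}, at each such selection the conditional probability, given the past, of drawing a transition in $B_{k+1}$ is at least $\eta_{B_{k+1}}>0$. The strata are correctly targeted because the sampler first draws $A\sim p_{\mathcal{O}_k}$ and then samples from $\mu_{\mathcal{O}_k,A}$. The probability of missing $B_{k+1}$ in the first $n$ selections is therefore at most $(1-\eta_{B_{k+1}})^n$, by the chain rule for conditional probabilities. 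Letting $n\to\infty$, this probability tends to $0$, so $B_{k+1}$ is hit almost surely. The sampled $\mathbf{t}\in B_{k+1}$ is a feasible transition, since $B_{k+1}$ lies in $\overline{\mathcal{O}_k}\cap\overline{\mathcal{O}_{k+1}}$. The algorithm's update step then adds $\mathcal{O}_{k+1}$, together with the edge through $\mathbf{t}$, to $\mathcal{G}_O$.

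To conclude, we take the intersection over the finitely many $k=1,\ldots,K$ of probability-one events, which is again a probability-one event. This gives both statements of the lemma simultaneously: every $B_k$ is eventually sampled, and every orbit in the sequence is eventually discovered.

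The main obstacle is the measure-theoretic handling of dependence. The selection times are random and depend on earlier sampling outcomes, including which orbits were discovered and the guidance policies. So independence cannot be assumed, and we rely on the uniform lower bound of Assumption~\ref{ass:transition} holding conditionally on the history (a Borel--Cantelli/Lévy-type argument). A related subtlety is that Assumption~\ref{ass:persistent} only applies to \emph{discovered} orbits, which is exactly why the argument must proceed inductively rather than treating all $B_k$ at once. We would also note that hitting $B_k$ once suffices, and that repeated hits happen automatically because sampling continues infinitely often. This is useful later for orbital convergence.
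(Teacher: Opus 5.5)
Your proposal is correct and follows essentially the same route as the paper. It inducts along the finite orbit sequence starting from the initially known orbit $\mathcal{O}_s$, uses Assumption~\ref{ass:persistent} to obtain infinitely many transition-sampling attempts on each discovered orbit, and uses Assumption~\ref{ass:transition} to bound the conditional probability of missing $B_k$ in $n$ attempts by $(1-\eta_{B_k})^n$. Your explicit handling of the random selection times and your final intersection over finitely many probability-one events make precise what the paper's proof leaves implicit.
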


\begin{proof}
Consider a required transition neighborhood $B_k \subset \mathcal{T}^{A_k}_{\mathcal{O}_{k-1},\mathcal{O}_k}.$ Suppose that its predecessor orbit $\mathcal{O}_{k-1}$ has been discovered. By Assumption~\ref{ass:persistent}, $\mathcal{O}_{k-1}$ receives infinitely many transition-sampling attempts almost surely.

By Assumption~\ref{ass:transition}, each such attempt has probability at least $\eta_{B_k}>0$ of sampling inside $B_k$. Conditioned on any previous sampling history, the probability of missing $B_k$ in the next $n$ such attempts is therefore at most $(1-\eta_{B_k})^n$, which converges to zero as $n\rightarrow\infty$. Hence $B_k$ is eventually sampled almost surely.

The initial orbit $\mathcal{O}_0$ is known by construction. Applying the argument to $B_1$ implies eventual discovery of $\mathcal{O}_1$ almost surely. Assumption~\ref{ass:persistent} applies to $\mathcal{O}_1$, and the same argument establishes eventual discovery of $\mathcal{O}_2$. Proceeding inductively along the finite orbit sequence establishes the claim.
\end{proof}

The constituent roadmaps of Algorithm~\ref{alg:ao-mrtamp} evolve across multiple modes and orbits rather than being constructed independently for each orbit. For analysis, we consider their restriction to the mode- and orbit-labeled configurations and motions associated with a relevant orbit
$\mathcal{O}$.

\begin{lemma}[Orbital motion convergence]
\label{lem:orbital-ao}
Suppose that, within every relevant orbit, the restricted constituent roadmaps satisfy the asymptotic coverage and connection conditions of their underlying AO single-robot roadmap planners, and that the implicit tensor-product tree satisfies the exploration and rewiring conditions required by dRRT$^*$~\cite{shome2020drrt}. Under Assumption~\ref{ass:persistent}, the best feasible motion represented between transition neighborhoods within every orbit on a robustly optimal sequence converges almost surely to its optimal robust orbital cost.
\end{lemma}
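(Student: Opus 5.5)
The plan is to reduce the claim to the asymptotic optimality of dRRT$^*$ inside a single orbit, and then use finiteness of the robustly optimal orbit sequence to pass to all orbits at once. Fix a robustly optimal sequence $\mathcal{O}_0,\ldots,\mathcal{O}_K$ with transition neighborhoods $B_1,\ldots,B_K$ given by Lemma~\ref{lem:robust-seq}. By Lemma~\ref{lem:transition-coverage}, almost surely every $\mathcal{O}_k$ is eventually discovered and every $B_k$ eventually receives a sampled transition. After a finite, almost surely reached time $\tau$, each orbit $\mathcal{O}_k$ therefore has an entry transition in $B_k$ (or $\mathbf{q}_s$ for $k=0$) and an exit transition in $B_{k+1}$ (or a goal neighborhood for $k=K$). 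These configurations are inserted into the constituent roadmaps through $\mathrm{ExpandRoadmaps}$. From time $\tau$ on, I treat each orbit as a fixed single-modal multi-robot motion planning problem between two endpoint sets of positive intrinsic measure.

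Within a fixed $\mathcal{O}_k$, I restrict each constituent roadmap $G_i$ to its $\mathcal{O}_k$-labeled vertices and edges. By Assumption~\ref{ass:persistent}, $\mathcal{O}_k$ receives infinitely many interior-sampling, roadmap-expansion, and tensor-tree-expansion attempts almost surely. So the restricted roadmaps see an unbounded number of samples, and by hypothesis they satisfy the coverage and connection-radius conditions of their underlying AO single-robot planners. The key step is then to invoke the dRRT$^*$ theorem~\cite{shome2020drrt}. It states that if the constituent roadmaps are AO (in the PRM$^*$ sense) and the implicit tensor tree receives infinitely many expansions with the stated rewiring, then the best path in the tree between fixed start and goal converges almost surely to the optimal robust cost in the composite space.

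Two subtleties must be handled.
\begin{enumerate}
\item \emph{Interleaved operations.} The interleaving with other orbits, lazy validation, and guidance only reorders or delays operations. I would argue that the dRRT$^*$ proof depends only on the subsequence of operations applied to $\mathcal{O}_k$, and that lazy checking never discards a valid edge, because it is checked exactly before insertion.
\item \emph{Varying endpoints.} The endpoints are not fixed points but samples that keep arriving in $B_k$ and $B_{k+1}$. I would handle this with the robustness condition: for any $\delta>0$ there is a positive-measure subneighborhood of endpoints whose optimal connecting cost is within $\delta$ of the robust orbital optimum. Such endpoints are sampled almost surely, so applying dRRT$^*$ to them gives a $\limsup$ bound of robust cost $+\,\delta$. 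The lower bound is immediate, because every represented motion is feasible.
\end{enumerate}

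Finally, I take $\delta\to0$ along a countable sequence and intersect the almost-sure events over the finitely many orbits $k=0,\ldots,K$. This yields simultaneous almost-sure convergence on every orbit of the sequence.

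I expect the main obstacle to be justifying that dRRT$^*$'s guarantee survives the shared, evolving constituent roadmaps. Vertices from other orbits or modes are present, and the sample density per orbit is controlled only by the persistent-exploration frequency, not by a fixed schedule. My first attempt is to argue that the orbit-labeled restriction is, in distribution, a subsequence of an i.i.d. sampling process on $\mathcal{O}_k$. The connection radius would then be computed from the number of $\mathcal{O}_k$-labeled samples, so the standard PRM$^*$ and dRRT$^*$ radius conditions apply verbatim to the restricted roadmaps.
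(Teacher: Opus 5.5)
Your proposal is correct and follows essentially the same route as the paper's proof: restrict the shared constituent roadmaps to the orbit, use Assumption~\ref{ass:persistent} to obtain unbounded roadmap and tensor-tree effort there, invoke the asymptotic optimality of dRRT$^*$ on the implicit tensor product, and note that lazy validation only reorders collision checks without discarding feasible edges. Your added details make explicit what the paper's sketch leaves implicit: securing endpoints through Lemma~\ref{lem:transition-coverage} (this uses Assumption~\ref{ass:transition}, which is not among this lemma's stated hypotheses but is in force for Theorem~\ref{thm:global-ao}), the $\delta$-argument for moving endpoints, and intersecting finitely many almost-sure events; your closing i.i.d.-subsequence concern is unnecessary because the roadmap coverage and connection conditions are hypotheses of the lemma.
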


\begin{proof}
By Assumption~\ref{ass:persistent}, every relevant discovered orbit receives unbounded constituent-roadmap sampling and tensor-product-search effort. Restricting the shared constituent roadmaps to such an orbit produces growing roadmap subgraphs satisfying the assumed single-robot AO conditions. Their tensor product is searched implicitly using the exploration and rewiring mechanism of dRRT$^*$, yielding asymptotically optimal multi-robot motion within the orbit.

Lazy validation changes only when candidate motions are checked. Every candidate edge selected for use in the tensor-product search is validated before being accepted into a feasible solution, while invalid edges are rejected. Thus, lazy validation changes the order of collision checking without removing feasible motions from the asymptotic search space. The best feasible orbital cost therefore converges almost surely to the optimal robust cost.
\end{proof}

Optimistic modal guidance and orbit-path guidance affect only the finite-time allocation of planning effort. Because they are mixed with the persistent exploration policy required by Assumption~\ref{ass:persistent}, they may prioritize promising modes and incumbent-solution orbits without permanently excluding alternative orbit sequences.

\subsection{Global Asymptotic Optimality}

\begin{theorem}[Asymptotic optimality of Algorithm~\ref{alg:ao-mrtamp}]
\label{thm:global-ao}
Let $\Pi_n$ denote the best complete solution returned by Algorithm~\ref{alg:ao-mrtamp} after $n$ planning iterations. Under the robustness conditions~\cite{shome2020pushing}, Assumptions~\ref{ass:strata}--\ref{ass:persistent}, and the conditions of Lemma~\ref{lem:orbital-ao},
\begin{equation}
\Pr\left(
\lim_{n\rightarrow\infty}J(\Pi_n)=J^*
\right)=1.
\label{eq:global-ao}
\end{equation}
\end{theorem}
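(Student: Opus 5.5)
The plan is a standard sandwich argument. For the lower bound, note that every $\Pi_n$ returned by Algorithm~\ref{alg:ao-mrtamp} is a feasible MR-TAMP solution. Lazy validation accepts only validated edges, and each returned transition lies in an orbit closure intersection. Hence $J(\Pi_n)\ge J^*$ for all $n$. Moreover, $\Pi_{\mathrm{best}}$ is replaced only by strictly cheaper solutions, so $J(\Pi_n)$ is nonincreasing and has a limit $J_\infty\ge J^*$. It therefore suffices to show that, for every fixed $\epsilon>0$, almost surely $J(\Pi_n)\le J^*+\epsilon$ for all sufficiently large $n$. Intersecting these events over the countable family $\epsilon=1/m$ then yields \eqref{eq:global-ao}.

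For the upper bound, fix $\epsilon>0$ and apply Lemma~\ref{lem:robust-seq} with $\epsilon/2$. This gives transition neighborhoods $B_1,\ldots,B_K$ on strata $\mathcal{T}^{A_k}_{\mathcal{O}_{k-1},\mathcal{O}_k}$. Any choice $\mathbf{t}_k\in B_k$, combined with orbital motions whose costs are within some tolerance $\delta$ of the robust orbital optima, gives a feasible solution of cost at most $J^*+\epsilon/2+(K+1)\delta$. I would choose $\delta=\epsilon/(2(K+1))$; $K$ is finite, which is what makes this choice possible.

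By Lemma~\ref{lem:transition-coverage}, almost surely there is a finite iteration $n_1$ by which every $B_k$ has received a sample and every $\mathcal{O}_k$ has been discovered. Sampled transitions are never discarded, so they stay available to the tensor-product search. After $n_1$, I would invoke Lemma~\ref{lem:orbital-ao} in each of the finitely many orbits $\mathcal{O}_0,\ldots,\mathcal{O}_K$. This gives an almost surely finite $n_2\ge n_1$ after which, in each orbit $k$, the best represented motion between the sampled entry in $B_k$ and the sampled exit in $B_{k+1}$ has cost within $\delta$ of optimal. A finite union of null sets is null, so all these events hold simultaneously almost surely.

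The main obstacle is gluing these per-orbit guarantees into one path in the single tree $\mathbf{T}$, as opposed to a concatenation of separately optimized pieces. One issue is that the tree's cost-to-come at an exit transition must reflect the near-optimal in-orbit motion. The other is that the transition samples certifying the bound may themselves be refined over time. I would handle this by conditioning on one fixed realized tuple $(\mathbf{t}_1,\ldots,\mathbf{t}_K)$ and inducting along $k$. Assuming the tree's cost-to-come at $\mathbf{t}_k$ is within $k\delta$-type slack of the robust optimum, dRRT$^*$ rewiring within $\mathcal{O}_k$ eventually makes the cost-to-come at $\mathbf{t}_{k+1}$ at most the cost at $\mathbf{t}_k$ plus the near-optimal orbital motion cost plus $J_T(\mathbf{t}_{k+1})$. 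This uses the additive decomposition \eqref{eq:additive-cost}, and rewiring only decreases costs.

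The final node lies in $\mathcal{Q}_g$, so $\mathrm{BestSolution}$ returns a path with $J\le J^*+\epsilon$. By monotonicity, $J(\Pi_n)\le J^*+\epsilon$ for all later $n$. The guidance mechanisms enter only through Assumption~\ref{ass:persistent}, which already guarantees the infinitely many attempts these steps use.
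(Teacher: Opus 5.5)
Your proposal is correct and follows the same route as the paper: Lemma~\ref{lem:robust-seq} gives an $\epsilon$-good transition sequence, Lemma~\ref{lem:transition-coverage} gives almost-sure discovery of its neighborhoods and orbits, Lemma~\ref{lem:orbital-ao} gives intra-orbit convergence, and monotonicity of the incumbent together with the lower bound $J^*$ finishes the argument. The main difference is that you spell out the $\epsilon$-splitting, the countable intersection over $\epsilon=1/m$, and the cost-to-come gluing along the single tree $\mathbf{T}$, which the paper leaves implicit by appealing to the multi-modal AO result of~\cite{shome2020pushing}.
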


\begin{proof}
Fix any $\epsilon>0$. By Lemma~\ref{lem:robust-seq}, there exists a finite sequence of positive-intrinsic-measure transition neighborhoods and sufficiently accurate orbital motions admitting a feasible MR-TAMP solution with cost arbitrarily close to $J^*$. Lemma~\ref{lem:transition-coverage} ensures that all transition neighborhoods in this sequence, and hence all corresponding orbits, are eventually discovered almost surely. Lemma~\ref{lem:orbital-ao} then ensures asymptotic convergence of the
intervening orbital motion costs.

Thus, the transition-coverage and continuous-optimization conditions required by the multi-modal AO result~\cite{shome2020pushing} are satisfied by Algorithm~\ref{alg:ao-mrtamp}. Consequently, for every $\epsilon>0$ the planner eventually contains a feasible solution of cost at most $J^*+\epsilon$ almost surely. Since the incumbent cost is nonincreasing and no feasible solution can have cost below $J^*$, its cost converges almost surely to $J^*$.
\end{proof}

\section{Experiments}
\label{sec:experiments}
We evaluate the proposed framework on MR-TAMP problems designed to assess its practical planning performance. While our main contribution is the theoretical analysis of asymptotically optimal MR-TAMP, we also develop a practical algorithm that realizes the resulting planning principles. The experiments evaluate (i) the ability to find feasible solutions across multi-robot manipulation tasks, (ii) the effect of orbit- and modal-level guidance on planning performance, (iii) the anytime improvement of solution quality, and (iv) scalability with increasing numbers of robots and objects.

All planners are implemented in Python and evaluated in PyBullet on a workstation equipped with an Intel Core Ultra 9 285K CPU (24 cores) and 64~GB of RAM. All methods are evaluated using the same planning budget and environment settings. Each experiment is repeated over 5 random seeds, with the same set of seeds used across methods. We report success rate, time to the first feasible solution, first-solution cost, final best-solution cost, and solution-cost convergence over planning time.







\subsection{Baselines and Ablations}


We compare against two baselines. \textbf{Composed PRM$^\ast$}~\cite{hartmann2025sampling} constructs a roadmap directly in the composite multi-robot configuration space and is used to evaluate the benefit of constituent roadmaps and implicit tensor-product search. \textbf{Proposed (unguided)} is an ablation of the proposed method that removes orbit-frontier and modal-state guidance while retaining the same underlying planning framework.

\subsection{Benchmark Tasks}

We evaluate the planners on four multi-robot manipulation tasks, as illustrated in Fig.~\ref{fig:environments}. The simulation environments is using Franka Emika Panda robots with 7 DoF. The \textit{Box Pick-and-Place} task requires two robots to retrieve 2 objects from a box, where the limited workspace permits only one robot to pick at a time, and place them within the black target region on the table. The \textit{Handover} task requires coordinated object transfer between two robots and evaluates transitions involving robots interactions. The \textit{Stacking} task requires the robots to construct a target stack of 4 cubes. This task requires task order reasoning and introduces precedence constraints between object placements. For these three tasks, we run each algorithm for a maximum of 500 iterations with a time limit of 10 minutes. Each run terminates when either limit is reached.

The \textit{Simple Pick-and-Place} task requires multiple robots to pick objects from specified initial locations and place them at corresponding goal locations. Unlike the preceding tasks, this task is parameterized by the numbers of robots and objects and is used primarily for the scalability study. An example with three robots is shown in Fig.~\ref{fig:3robotpickplace}.

\begin{figure*}[t]
    \centering
    \subfloat[Box Pick-and-Place with sequential picking constraint.]{
        \includegraphics[width=0.23\textwidth]{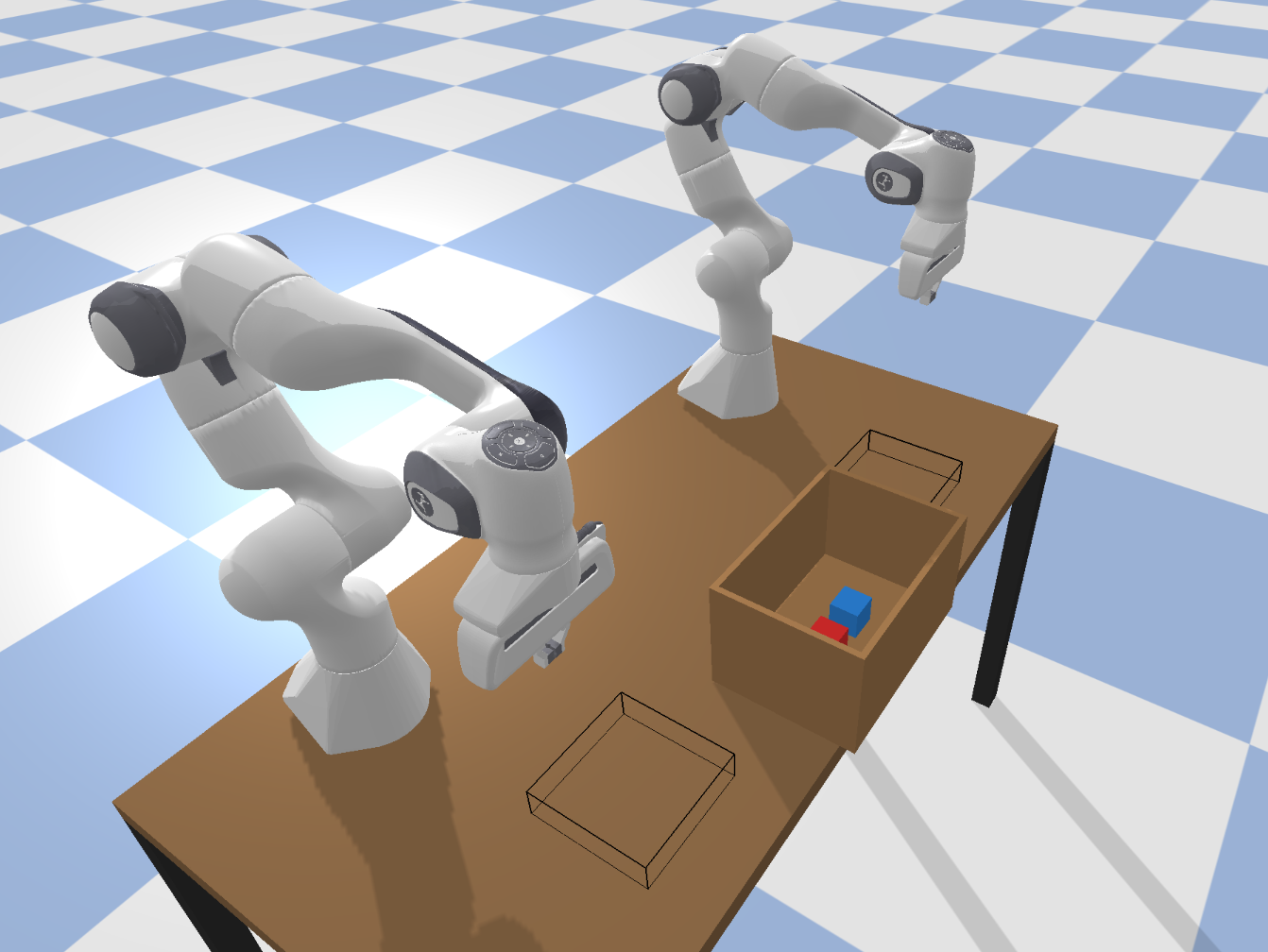}
        \label{fig:pick_place}
    }
    \hfill
    \subfloat[Handover with object goals indicated by corresponding colored dots.]{
        \includegraphics[width=0.23\textwidth]{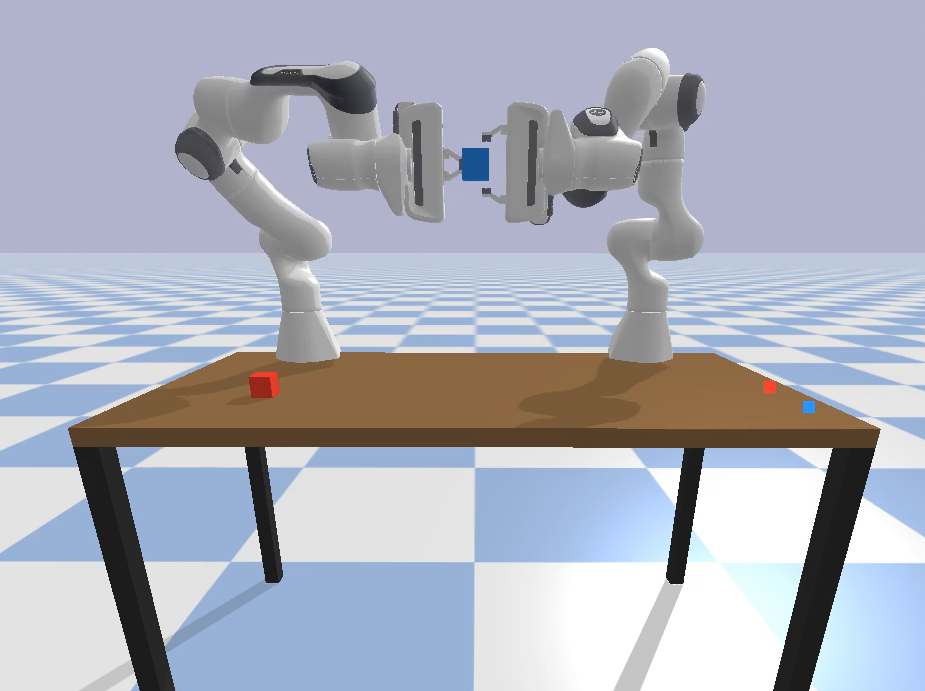}
        \label{fig:handover}
    }
    \hfill
    \subfloat[Stacking with the goal location indicated by the black dot.]{
        \includegraphics[width=0.23\textwidth]{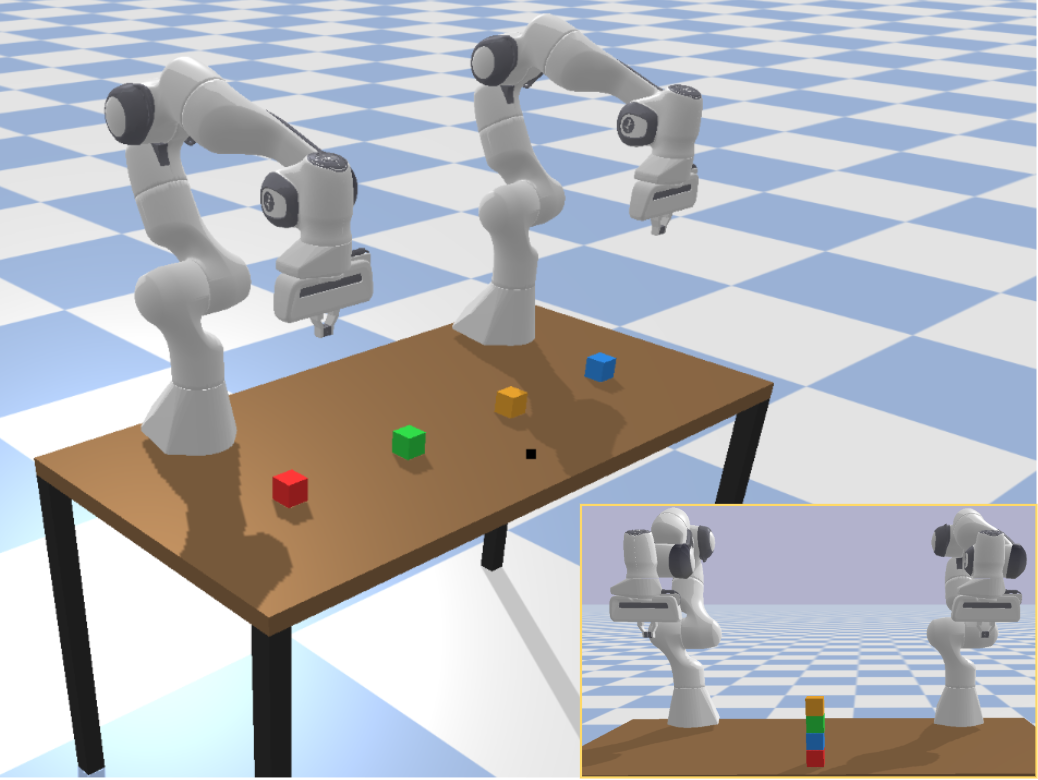}
        \label{fig:stacking}
    }
    \hfill
    \subfloat[Three robots pick and place cubes at the corresponding color-coded goals.]{
        \includegraphics[width=0.23\textwidth]{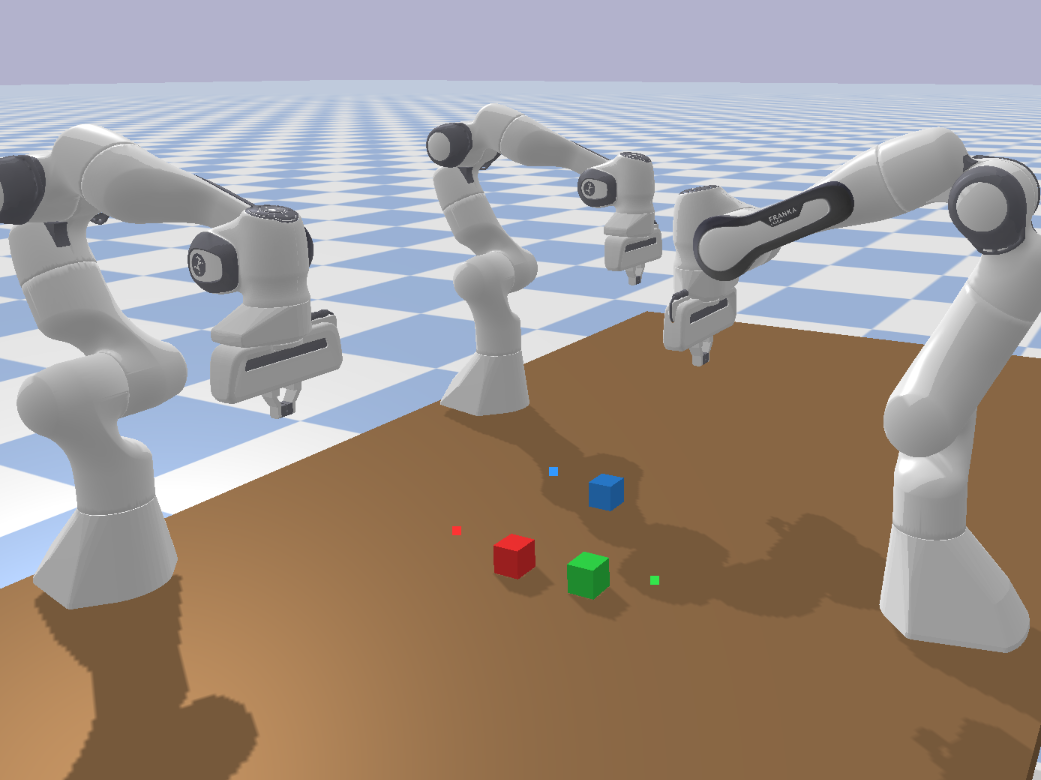}
        \label{fig:3robotpickplace}
    }
    \caption{\textbf{Multi-robot manipulation tasks} used in the experiments.}
    \label{fig:environments}
\end{figure*}

\subsection{Results}



\begin{figure}[t]
    \centering
    \subfloat[Final solution across random seeds.]{
        \includegraphics[width=0.9\linewidth]
        {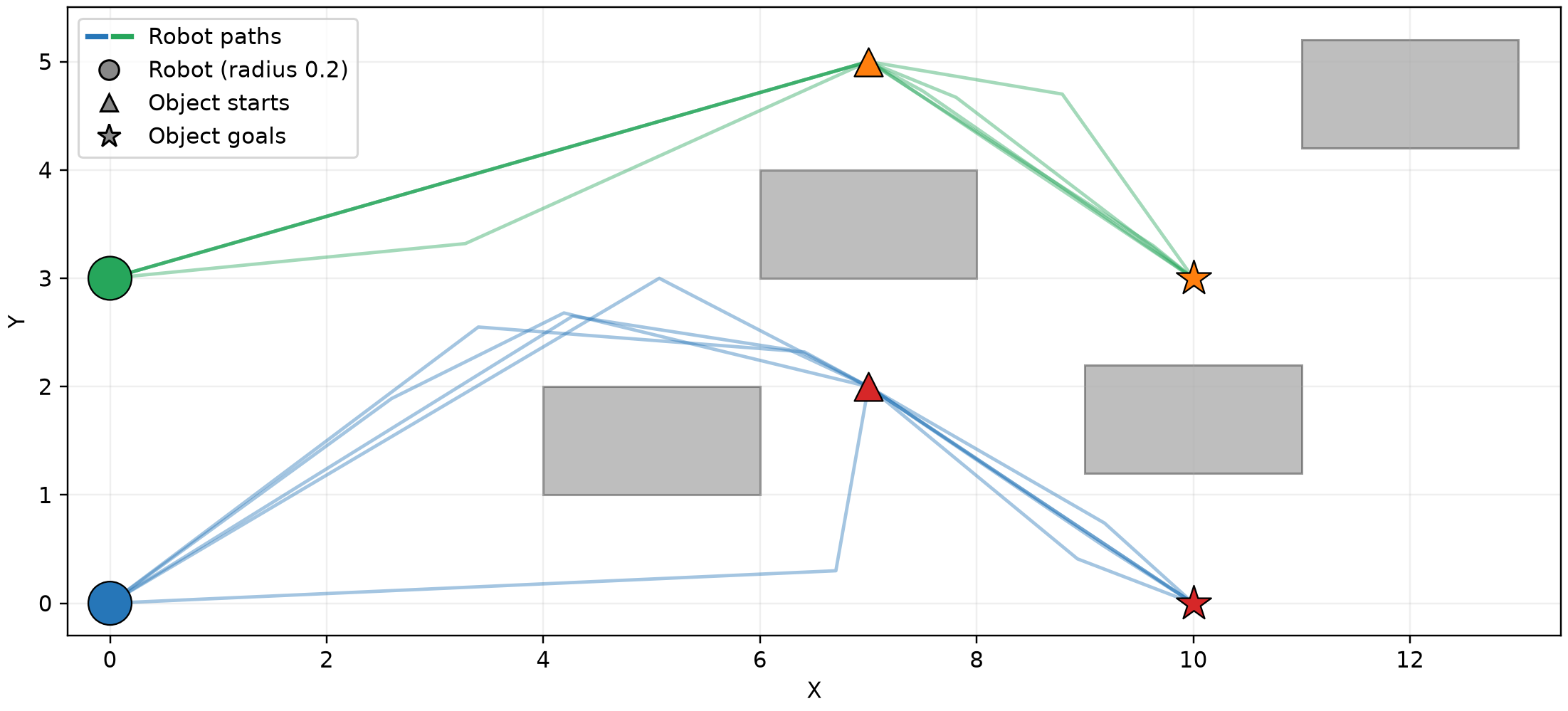}
        \label{fig:2d_solution}
    }
    \hfill
    \subfloat[Mean convergence 95\% CI.]{
        \includegraphics[width=0.9\linewidth]
        {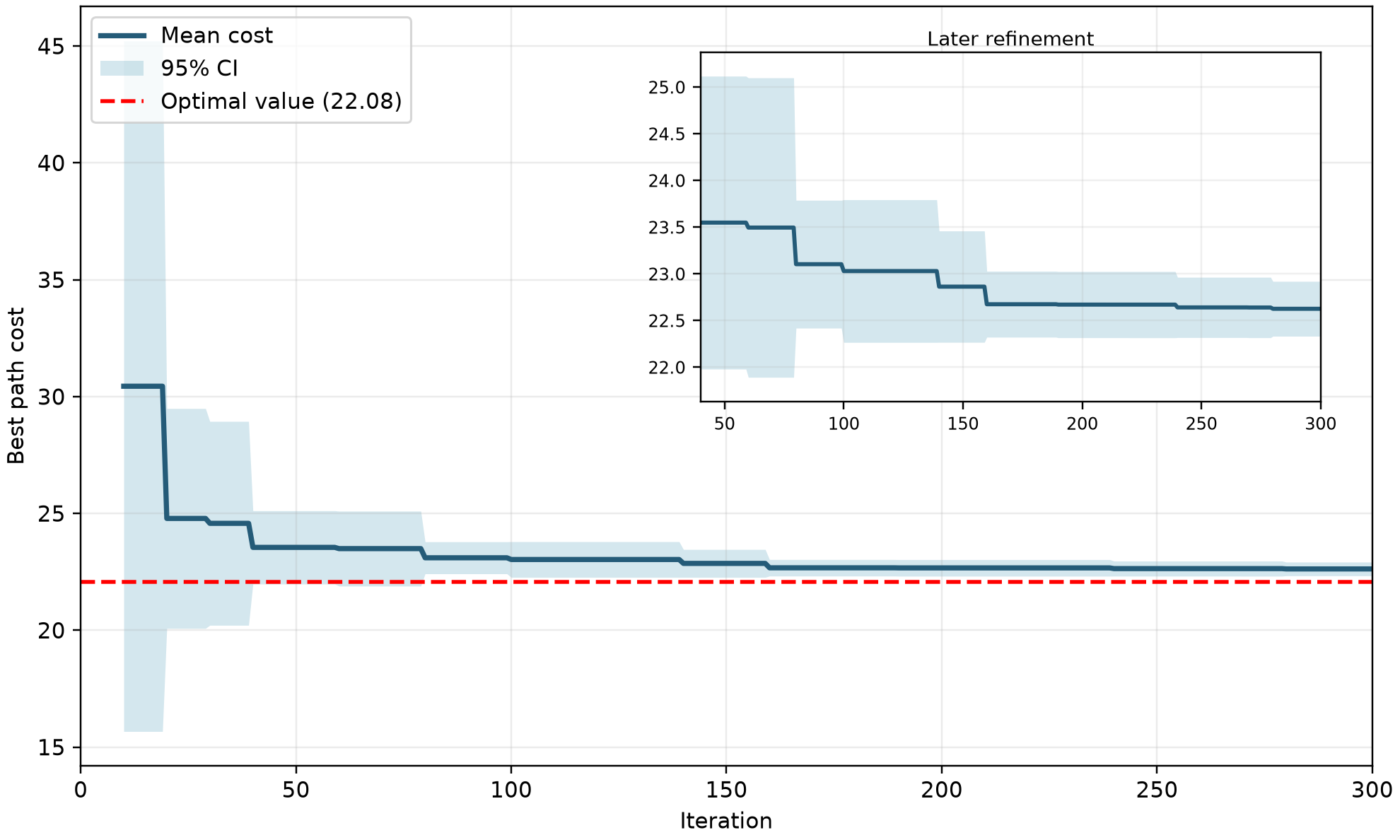}
        \label{fig:convergence}
    }
    \caption{\textbf{Results in the 2D multi-modal motion planning environment.}}
    \label{fig:2denv}
\end{figure}


Fig.~\ref{fig:2denv} illustrates the proposed planner in the 2D multi-modal motion planning environment with robot radius $r=0.2$. The planner finds collision-free paths through the intermediate subgoals, while the solution cost decreases after the first feasible solution and is subsequently refined (Fig.~\ref{fig:convergence}). The mean cost approaches the known optimum of approximately $22.08$, demonstrating the anytime behavior of the planner.

Results on the Franka environments are reported in Tables~\ref{tab:main_results} and~\ref{tab:scalability}. Overall, the proposed method demonstrates improved reliability, particularly for problems with longer manipulation horizons. For the 2-robot/6-object case,
it achieves a 100\% success rate, compared with 40\% for Composed PRM$^\ast$ and 0\% for Proposed (unguided). Similarly, for the 3-robot/6-object case, the success rates are 100\%, 80\%, and 0\%, respectively. These results indicate that high-level guidance becomes increasingly important as the number of required manipulation transitions grows.

Composed PRM$^\ast$ remains competitive on smaller instances, where explicit sampling of the composite configuration space can quickly establish connectivity. In contrast, the proposed formulation uses lower-dimensional constituent roadmaps and searches their tensor product implicitly, avoiding explicit construction of the composite roadmap. However, the performance of Proposed (unguided) on longer-horizon problems shows that the implicit representation alone does not guarantee efficient exploration. The proposed orbit-frontier and modal-state guidance directs the search toward promising orbits and transitions, substantially improving reliability on these problems.

The proposed method also generally finds feasible solutions faster on the more challenging problems and achieves lower final costs in most cases. The incumbent orbit path further guides subsequent exploration toward regions relevant to improving the current solution while retaining exploration of alternative orbits.

Finally, the transition sampler discovers both individual and coupled robot transitions, including the two-robot transition required for handover. Scalability nevertheless becomes more challenging as the number of robots increases, since more active-robot subsets must be considered and many coupled transitions are geometrically infeasible, increasing the sampling and collision-checking effort.





\begin{table*}[t]
\centering
\caption{Planning performance on the Box Pick-and-Place, Handover, and Stacking tasks. SR denotes the success rate, $T_{\mathrm{first}}$ the time to the first feasible solution, $J_{\mathrm{first}}$ the cost of the first solution, and $J_{\mathrm{final}}$ the final best solution
cost. Time and cost are reported as mean $\pm$ standard deviation over multiple random seeds. Higher SR and lower time and cost indicate better performance.}
\label{tab:main_results}
\resizebox{\textwidth}{!}{
\begin{tabular}{lcccccccccccc}
\toprule
& \multicolumn{4}{c}{Box Pick-and-Place}
& \multicolumn{4}{c}{Handover}
& \multicolumn{4}{c}{Stacking} \\
\cmidrule(lr){2-5}
\cmidrule(lr){6-9}
\cmidrule(lr){10-13}

Method
& SR $\uparrow$
& $T_{\mathrm{first}}$ $\downarrow$
& $J_{\mathrm{first}}$ $\downarrow$
& $J_{\mathrm{final}}$ $\downarrow$
& SR $\uparrow$
& $T_{\mathrm{first}}$ $\downarrow$
& $J_{\mathrm{first}}$ $\downarrow$
& $J_{\mathrm{final}}$ $\downarrow$
& SR $\uparrow$
& $T_{\mathrm{first}}$ $\downarrow$
& $J_{\mathrm{first}}$ $\downarrow$
& $J_{\mathrm{final}}$ $\downarrow$ \\
\midrule

Composed PRM$^\ast$
& 100\% & $11\pm 5$ & $46.3\pm 5.2$ & $31.7\pm 1.1$
& 100\% & $23\pm 9$ & $56.7\pm 9.4$ & $41.3\pm 1.1$
& 80\% & $121\pm 31$ & $63.2\pm 7.6$ & $57.6\pm 5.6$ \\

Proposed (unguided)
& 100\% & $13\pm 6$ & $46.9\pm 10.7$ & $33.8\pm 2.5$
& 100\% & $31\pm 15$ & $51\pm 1.3$ & $42.1\pm 2.9$
& 20\% & 242.5 & 71.1 & 65.1 \\


Proposed
& 100\% & $10\pm 3$ & $42.6\pm 3.4$ & $30.0\pm 2.3$
& 100\% & $12\pm 3$ & $49.5\pm 6.2$ & $35.4\pm 1.6$
& 100\% & $47\pm 37$ & $66.1\pm 2.7$ & $51.3\pm 2.3$ \\

\bottomrule
\end{tabular}
}
\end{table*}









\begin{table*}[t]
\centering
\caption{Scalability comparison on the Simple Pick-and-Place task with increasing numbers of robots and objects.}
\label{tab:scalability}
\resizebox{\textwidth}{!}{
\begin{tabular}{cc|ccc|ccc|ccc}
\toprule
\multirow{2}{*}{Robots}
& \multirow{2}{*}{Objects}
& \multicolumn{3}{c|}{Composed PRM$^\ast$}
& \multicolumn{3}{c|}{Proposed (unguided)}
& \multicolumn{3}{c}{Proposed} \\
\cmidrule(lr){3-5}
\cmidrule(lr){6-8}
\cmidrule(lr){9-11}

&
& SR $\uparrow$
& $T_{\mathrm{first}}$ $\downarrow$
& $J_{\mathrm{final}}$ $\downarrow$

& SR $\uparrow$
& $T_{\mathrm{first}}$ $\downarrow$
& $J_{\mathrm{final}}$ $\downarrow$


& SR $\uparrow$
& $T_{\mathrm{first}}$ $\downarrow$
& $J_{\mathrm{final}}$ $\downarrow$
\\
\midrule
2 & 2 & 100\% & $5\pm 3$ & $19.8\pm 0.9$ & 100\% & $6\pm 5$ & $22.9\pm 2.8$ & 100\% & $4\pm2$ & $19.1\pm 0.7$ \\
2 & 4 & 100\% & $26\pm 19$ & $46.2\pm 2.0$ & 80\% & $51\pm 39$ & $57.3\pm 3.8$ & 100\% & $30\pm33$ & $47.7\pm 4.4$ \\
2 & 6 & 40\% & $106\pm 14$ & $72.7\pm 0.4$ & 0\% & -- & -- & 100\% & $53\pm25$ & $78.2\pm 11.4$ \\
\midrule
3 & 3 & 100\% & $10\pm 5$ & $45.8\pm 5.6$ & 100\% & $50\pm 23$ & $43\pm 4.5$ & 100\% & $7\pm5$ & $41.7\pm 3.2$ \\
3 & 6 & 80\% & $157\pm 58$ & $113.1\pm 9.6$ & 0\% & -- & -- & 100\% & $111\pm 88$ & $97.2\pm 9.9$ \\
\midrule
4 & 4 & 100\% & $37\pm 23$ & $72.5\pm 7.6$ & 100\% & $103\pm 53$ & $95.6\pm 4.3$ & 100\% & $57\pm 33$ & $67.4\pm 10.3$ \\
4 & 6 & 80\% & $257\pm 159$ & $160.8\pm 1.5$ & 0\%- & -- & -- & 80\% & $176\pm 158$ & $140.4\pm 21.3$ \\

\bottomrule
\end{tabular}
}
\end{table*}

\subsection{Real Robot Experiments}

We further validate the proposed planner on a real-world setup consisting of two Franka Research 3 robotic manipulators operating in a shared workspace. We evaluate the planner on the \textit{Pick-and-Place}, \textit{Handover}, and \textit{Stacking} tasks. These tasks demonstrate different planning capabilities, including shared-workspace manipulation, coupled robot transitions, and precedence-constrained manipulation.

The physical environment is first reproduced in PyBullet, where the proposed planner performs motion planning and collision checking. The resulting collision-free multi-robot trajectories are then transferred to the physical robots for execution. The experiments demonstrate the feasibility of executing the planned multi-modal sequences and coordinated collision-free motions on the real robotic system.






\section{Conclusion}

We study asymptotically optimal planning for multi-robot task and motion planning.
The key geometric observation is that MR-TAMP mode boundaries naturally contain transition strata of different dimensions because different subsets of robots may participate in a modal transition.
We use this structure to define a conditional transition sampler and identify sufficient conditions under which relevant transition sequences are discovered while continuous motion inside each orbit converges to its optimum.
By combining these properties with existing AO-TAMP theory, we establish global asymptotic optimality for the resulting MR-TAMP framework.
We further describe a practical implementation based on dRRT$^*$ that searches multi-robot product roadmaps implicitly.
Together, these results provide a theoretical foundation for asymptotically optimal MR-TAMP while allowing scalable multi-robot motion-planning techniques to be used as interchangeable intra-orbit planners.

\bibliographystyle{IEEEtran}
\bibliography{references}

\end{document}